\DeclareMathOperator{\dist}{dist}
\DeclareMathOperator{\vol}{Vol}
\DeclareMathOperator{\area}{Area}
\newcommand{\R}{\mathbb{R}}
\newcommand{\rd}{\delta}
\newtheorem{definition}{Definition}
\newtheorem{theorem}{Theorem}
\newcommand{\RE}{\mathbb{R}}
\newcommand{\calB}{\mathcal{B}}
\newcommand{\calF}{\mathcal{F}}
\newcommand{\calC}{\mathcal{C}}
\newcommand{\calP}{\bar{p}}
\newcommand{\calV}{\mathcal{V}}
\newcommand{\calU}{\mathcal{U}}
\newcommand{\calS}{\mathcal{S}}
\newcommand{\chij}{\chi_{t_0}^{t_j}}
\newcommand{\chit}{\chi_{t_0}^{t}}
\newcommand{\Lp}{L}
\newcommand{\lambdaS}[1][\varphi]{\lambda_{\Sigma_{#1}}}
\newcommand{\rmu}[1][\varphi]{r_{{#1}}}
\newcommand{\niton}{\not\owns}
\newcommand{\phistar}{\varphi^\star}
\newcommand{\Ystar}[1][\varphi]{\calC(r_{{#1}})}
\newcommand{\Bphi}[1][\varphi]{B({#1},r_{#1})^S}
\newcommand{\Lc}{\dist}
\title{On the Verification of Neural ODEs with Stochastic Guarantees}
\author {
    Sophie Gruenbacher\textsuperscript{\rm 1},
    Ramin Hasani\textsuperscript{\rm 1,2},
    Mathias Lechner\textsuperscript{\rm 3},
    Jacek Cyranka\textsuperscript{\rm 4},\\
    Scott A. Smolka\textsuperscript{\rm 5},
    Radu Grosu\textsuperscript{\rm 1}\\
}
\begin{document}
\maketitle              
\begin{abstract}
We show that \emph{Neural} ODEs, an emerging class of time-continuous neural networks, can be verified by solving a set of global-optimization problems. For this purpose, we introduce \emph{Stochastic Lagrangian Reachability} (SLR), an abstraction-based technique for constructing a tight \emph{Reachtube} (an over-approximation of the set of reachable states over a given time-horizon), and provide stochastic guarantees in the form of confidence intervals for the Reachtube bounds. 
SLR inherently avoids the infamous wrapping effect (accumulation of over-approximation errors) by performing local optimization steps to expand safe regions instead of repeatedly forward-propagating them as is done by deterministic reachability methods. 
To enable fast local optimizations, we introduce a novel forward-mode adjoint sensitivity method to compute gradients without the need for backpropagation.
Finally, we establish asymptotic and non-asymptotic convergence rates for SLR.
\end{abstract}
%
%
%
\section{Introduction}\label{sect:introduction}
Neural ordinary differential equations (Neural ODEs) \cite{neuralODEs},  which are analogous to a continuous-depth version of deep residual networks \cite{he2016deep},  exhibit considerable computational
efficiency
on time-series modeling tasks. Although Neural ODEs do not necessarily improve the performance of contemporary deep models, they enable the rich theory and tools from the field of differential equations to be applied to deep models. Examples include a better characterization of Neural ODEs \cite{rubanova2019latent,dupont2019augmented,durkan2019neural,jia2019neural}, and a better understanding of their robustness \cite{yan2020robustness},  stability \cite{yang2020dynamical},
and controllability \cite{quaglino2019snode,holl2020learning,kidger2020neural}.

As the use of Neural ODEs on real-world applications increases \cite{finlay2020train,lechner2020neural,erichson2020lipschitz,lechner2020learning,hasani2020natural}, so does the importance of ensuring their safety through the use of verification techniques. In this paper, we establish a theoretical foundation for the verification of Neural ODE networks. 

In particular, we introduce \emph{Stochastic Lagrangian Reachability} (SLR), a new analysis technique with provable convergence and conservativeness guarantees for Neural ODEs $\partial_t x\,{=}\,f$, with field $f(x,x(0),t,\theta)$, hidden states $x(t)$, and parameters $\theta$. (SLR works in fact for any nonlinear system defined by a set of nonlinear differential equations.)

At the core of SLR is the translation of the reachability problem to a global optimization problem, at every time step $t$. The latter is solved globally, by uniformly sampling states $x$ from an initial ball $\calB_0$, and locally, by computing a local minimum via gradient descent from $x$. SLR avoids gradient descent if $x$ is within a spherical-cap around a previously sampled state or its corresponding local minimum. 

The radius of the cap is derived from the interval computation of the local Lipschitz constant of the objective function within the cap. The minimum computed by SLR at time $t$ stochastically defines an as-tight-as-possible ellipsoid covering all states reached at $t$ by the solution starting in $\calB_0$, with tolerance $\mu$ and confidence $1\,{-}\,\gamma$, for given values of $\mu$ and $\gamma$.  See Figure~\ref{fig:Notation}.

\begin{figure}[t]
    \centering
    \includegraphics[width=0.9\columnwidth]{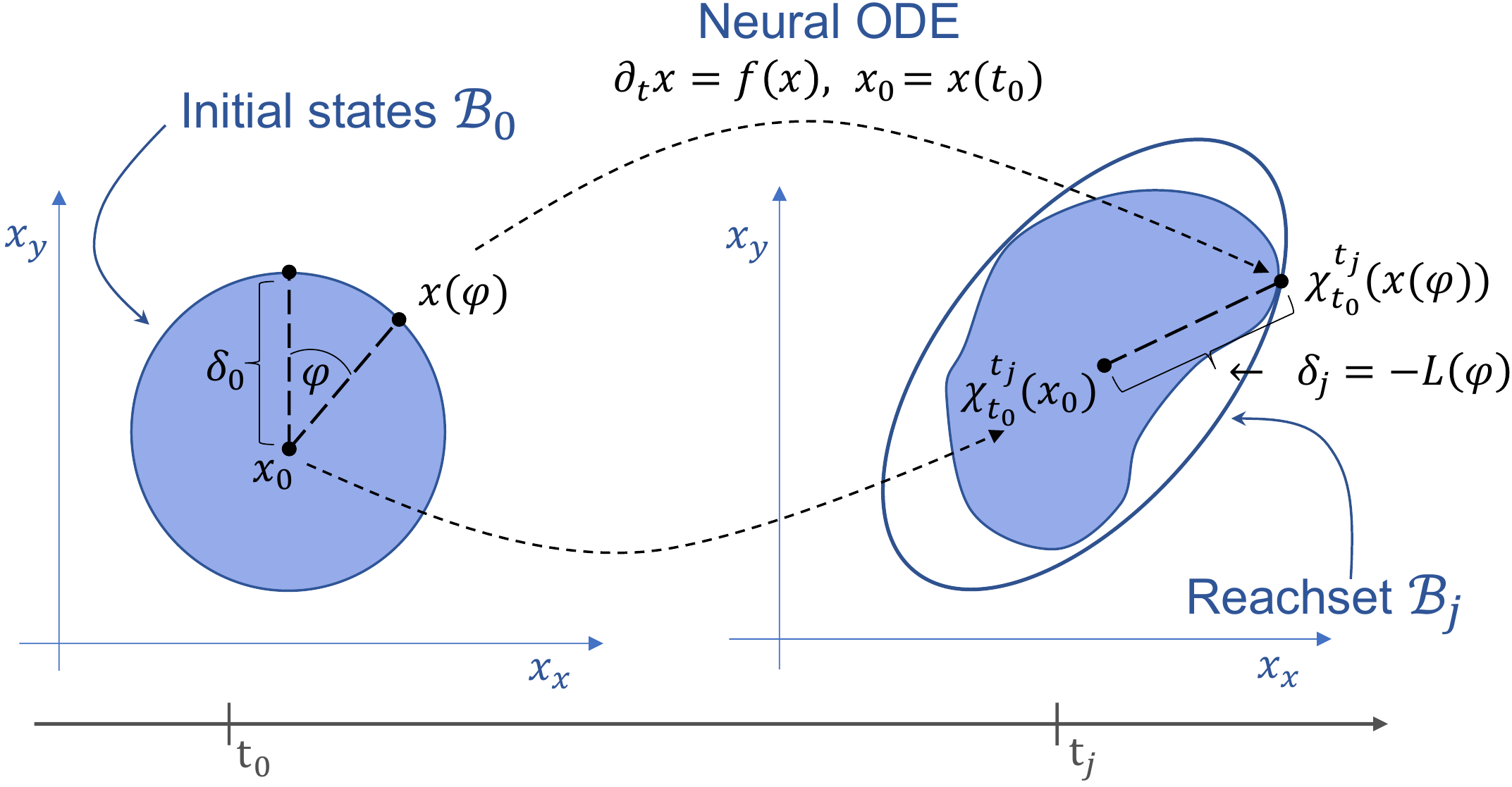}
    \caption{The conservative reachset $\calB_j$ at time $t_j$ computed using Lagrangian reachability and global optimization, for a Neural ODE starting from the ball $\calB_0$ at time $t_0$.}
    \label{fig:Notation}
\end{figure}

Since SLR employs interval arithmetic only locally to compute the spherical-caps (also called safety or tabu regions), it avoids the infamous wrapping effect \cite{lohnerOrig} of deterministic reachability methods (see Table~\ref{tab:related_works}), which prevents them from being deployed in practice. Consequently, our approach scales up to large-scale, real-life Neural ODEs.  To the best of our knowledge, none of the available tools has been successfully applied to Neural ODEs. 

We also introduce a novel forward formulation of the adjoint sensitivity method \cite{pontryagin2018mathematical} to compute the loss gradients in the optimization flow. This enables us to improve the time complexity of the optimization process compared to similar methods \cite{neuralODEs,zhuang2020adaptive}. 
%
%
%
%
%

\noindent\textbf{Summary of results.} In this work, we present a thorough theoretical approach to the problem of providing safety guarantees for the class of time-continuous neural networks formulated as Neural ODEs. As the main result, we develop SLR, a differentiable stochastic Lagrangian reachability framework, formulated as a global optimization problem. In particular, we prove that 
SLR
converges (Theorem~\ref{thm:convergence guarantee}) to tight ellipsoidal safe regions (Theorem~\ref{thm:safety region radius}), within $\mathcal{O}(-\ln\gamma (\rd_0/r_{bound})^{2n})$ number of iterations (Theorem~\ref{thm:convergence rate}). This implies that for a given confidence level $\gamma$, our algorithm terminates according to the proposed rate, which leads to the important conclusion that the problem of constructing an ellipsoid abstraction of the true reachsets with probabilistic guarantees for Neural ODEs is decidable (the computed abstraction is conservative with confidence $\gamma$). We summarize our key contributions as follows:

\begin{itemize}
\itemsep0em
\item We introduce a theoretical framework for the verification of Neural ODEs by restating the reachability problem as a set of global-optimization problems.  
\item We solve each optimization problem globally, via uniform sampling, and locally, through gradient descent (GD), thereby avoiding costly Hessian computations in the process.
\item GD is avoided in spherical-caps around the start/end states of previous searches. The cap radius is derived from its local Lipschitz constant, computed via interval arithmetic.  
\item We design a forward-mode GD algorithm 
based on the adjoint sensitivity method for (Neural) ODEs.
\item We prove convergence properties of SLR, its safety guarantees, and discuss its time and space complexity. 
\end{itemize}

\section{Related Work}
\noindent\textbf{Global optimization.} The literature on global optimization for continuous problems is vast and includes many different approaches depending on the smoothness assumptions made about the objective function. Evolutionary strategies like those based on the covariance matrix \cite{cma,cmaes} work for general continuous objectives.  Deterministic interval-based branch-and-bound methods \cite{neumaier_2004,globintervals} work for differentiable objectives, and Lipschitz global optimization \cite{piyavskii,schubert,lipschitz} for  objectives satisfying the Lipschitz condition.
Our work is closest to the BRST algorithm \cite{brst,brst2,brst3} which for smooth objectives uses Hessians to compute the basins of attraction for local minima as ellipsoidal bounds. Such basins define tabu regions. The final estimate for the global minimum and reasonable confidence bounds are provided.\\[1mm]
\noindent\textbf{Stochastic reachability.} Existing work is mainly concerned with the verification of safety guarantees for stochastic hybrid systems with continuous dynamics (ODEs) in each mode.
Stochasticity is introduced in several ways: uncertainty in the model parameters \cite{sreach,sisat,probreach}, uncertainty in the discrete jumps between modes \cite{e71eb65e23844408b72fe95a84f88cb6}, and uncertainty in the initial state \cite{10.1145/3126508}.
The work of \cite{reliablecomput} focuses on the probabilistic verification of continuous-time ODEs with uncertainty in parameters and initial states.

\noindent\textbf{Reachability for continuous dynamical systems.} Most of the relevant techniques are deterministic and based on interval arithmetic. We provide a qualitative summary of existing reachability methods for continuous-time systems in Table~\ref{tab:related_works}.

\begin{table*}[t]
\scriptsize
\centering
\caption{A Perspective on Related Work}
\vspace{-2mm}
\begin{tabular}{l|c|c|c|c}
\toprule
\textbf{Technique} & \textbf{Deterministic} & \textbf{Parallelizable (single step)} & \textbf{Basis} & \textbf{wrapping effect} \\
\midrule
LRT \cite{Cyranka2017} & yes & no & Infinitesimal strain theory & yes \\
CAPD \cite{CAPD}& yes & no & Lohner algorithm & yes \\
Flow-star \cite{flowstar}& yes & no & Taylor models & yes \\
$\delta$-reachability \cite{deltadecidable} & yes & no & approximate satisfiability & yes\\
C2E2  \cite{c2e2} & yes & no & discrepancy function & yes\\
LDFM \cite{fansimul}& yes & yes & simulation, matrix measures & no\\
TIRA \cite{tira}  & yes & yes & second-order sensitivity & no\\
Isabelle/HOL \cite{isabelle} & yes & no & proof-assistant & yes\\
Breach \cite{breach,donze}& yes & yes & simulation, sensitivity & no\\
PIRK \cite{pirk} & yes & yes & simulation, contraction bounds & no\\
HR \cite{hr} & yes & no & hybridization & yes\\
ProbReach \cite{probreach2} & no & no & $\delta$-reachability, probability interval & yes \\
VSPODE \cite{reliablecomput} & no &no & p-boxes & yes \\
GP \cite{gp} & no & no & Gaussian process& no \\
SLR \textbf{Ours} & no & yes & stochastic Lagrangian reachability & no\\
\bottomrule
\end{tabular}
\caption*{\footnotesize \textbf{Note:} Deterministic refers to approaches that provide an overapproximation of the reach-set without any uncertainties. A ``No” in the deterministic column indicates a stochastic approach that yields a reach-set with a corresponding confidence interval.}
 \label{tab:related_works}
 \vspace{-4mm}
\end{table*}

\section{Setup}\label{sect:preliminaries}

In this section, we introduce our notation, preliminary concepts, and definitions required to construct our theoretical setup for the verification of Neural ODEs. 

\noindent\textbf{Neural ODE.} 
The derivative of the hidden states $x$ is computed by a neural network $f$ parameterized by $\theta$ as follows \cite{neuralODEs}: 
\begin{equation}
    \partial_t x = f(x,x(0),t,\theta), x_0 \in \calB_0
    \label{neuralode}
\end{equation}
We require that the Neural ODE is Lipschitz-continuous and forward-complete. The solution to this initial-value problem can be computed by numerical ODE solvers, from any initial system state $x(0)\,{=}\,x_0$. Consequently, the numerical solution can be trained by reverse-mode automatic differentiation \cite{rumelhart1986learning}, either through the solver, by a vanilla backpropagation algorithm \cite{hasani2020liquid}, or by treating the solver as a blackbox and using the adjoint sensitivity method \cite{pontryagin2018mathematical}. 

\noindent\textbf{Geometrical deformation in time by a flow $\chi$.} To describe the optimization problem, we use
Eulerian and Lagrangian coordinates from classical continuum mechanics. We regard the set of initial states, which is the ball $\calB_0\,{=}\,B(x_0,\rd_0)$, as a body that is being deformed in time by a flow $\chi$. Given a point $x\,{\in}\,\calB_0$ in Eulerian coordinates (the undeformed configuration), there is at every time $t_j\,{>}\,t_0$ the representation $x(t_j)\,{=}\,\chij(x)$ of that point in Lagrangian coordinates (the configuration deformed by $\chi$).

The deformation of $\calB_0$ in time is related to the Neural ODE, where $\chi$ is defined as the solution flow of Eq.~\eqref{neuralode}.

\noindent\textbf{Reachset.} A reachset is the set of all states reached at a target time $t$, given the initial states and a flow. More formally:
\begin{definition}
    Given a set of initial states $\calB_0$ at time $t_0$, the target time $t_j\,{\ge}\,t_0$, and the flow $\chi$ of the Neural ODE~\eqref{neuralode}, we call $\calB_j(\calB_0)\,{\subset}\,\R^n$ a conservative \emph{reachset} enclosure if $\chij(x)\,{\in}\, \calB_j(\calB_0)$, for all $x\,{\in}\, \calB_0$; i.e., the reachset bounds all state-trajectories of the Neural ODE.
\end{definition}
Whenever the initial set $\calB_0$ is known from the context, we
simply refer to the reachset as \textit{the Reachset at time $t_j$}, or $\calB_j$.

\noindent\textbf{Reachtube.} A reachtube is a series of reachsets within a determined time-horizon. Formally:

\begin{definition}\label{def:Reachtube}
    Given a set of initial states $\calB_0$ at time $t_0$, and a time horizon $T$, we use $B(\calB_0,T)$ to denote a sequence of time-stamped reachsets $\calB_1$, $\dots$, $\calB_k$ with $t_0 \,{\le}\, t_1 \,{\le}\,{\dots} \,{\le}\, t_k\,{=}\,T$.
\end{definition}
Whenever the initial set, time horizon, and flow are known from the context, we use the term \emph{reachtube over-approximation} or $\calB$, for that sequence of reachsets.

\begin{definition}[Ellipsoid]\label{def:ellipsoid}
    Given $A_j, M_j \in \R^{n\times n}$, $M_j\succ 0$ with $A_j^T A_j = M_j$ and $\lVert x \rVert_{M_j} = \sqrt{x^TM_jx}$, we call $B_{M_j}(x_0,\rd)$ a ball in metric $M_j$ (or an ellipsoid) with center $x_0$ and radius $\rd$ if $\lVert x-x_0\rVert_{M_j}\le\rd$ for all $x\,{\in}\, B_{M_j}(x_0,\rd)$.
\end{definition}

\noindent\textbf{Reachability as an optimization problem.} Given a time horizon $T$, an initial ball $\calB_0 \,{=}\, B_I(x_0, \rd_0)$ with center $x_0$ and radius $\rd_0$, and Euclidean metric $M_0\,{=}\,I$, our  goal is to find a tight reachtube $\calB$, bounding all state-trajectories of the Neural ODE~\eqref{neuralode}.

We capture the reachsets of $\calB$ by ellipsoids $\calB_j\,{=}\, B_{M_j}(\chij(x_0),\rd_j)$ with center $\chij(x_0)$, radius $\rd_j$, and metric $M_j$. At every time $t_j$, we use as the center $\chij(x_0)$, the numerical integration of $x_0$, and as the metric $M_j$, the optimal metric in $\chij(x_0)$ minimizing the volume of the ellipsoid, as proposed in~\cite{gruenbacher2020lagrangian}. 

Thus, our goal is to find at every time step $t_j$, a radius $\rd_j$ which (stochastically) guarantees that $\calB_j$ is a conservative reachset. I.e., at each $t_j$, we want to find the maximal distance of all $\chij(x)$ to center $\chij(x_0)$ in metric $M_j$ for $x\,{\in}\,\calB_0$, and define $\rd_j$ as this distance. Thus the optimization problem can be defined as follows:
\begin{align}
    \rd_j &\ge \max_{x\in\calB_0} \left\lVert \chij(x) - \chij(x_0)\right\rVert_{M_j}\label{eq:optim1}
    = \max_{x\in\calB_0} \dist\left(\chij(x)\right)
\end{align}
where we use $\dist(\chij(x))$ to describe the distance in Eq.~\eqref{eq:optim1} when metric $M_j$ and starting point $x_0$ are known.

As we require Lipschitz-continuity and forward-completeness, the map $x \mapsto \chij(x)$ is a homeomorphism and commutes with closure and interior operators. In particular, the image of the boundary of the set $\calB_0$ is equal to the boundary of the image $\chij(\calB_0)$. Thus, Eq.~\eqref{eq:optim1} has its optimum on the surface of the initial ball $\calB_0^S = \textrm{surface}(\calB_0)$, and we will only consider points on the surface. In order to be able to optimize this problem, we describe the points on the surface with (n-dimensional) polar coordinates such that every point $x\,{\in}\,\calB_0^S$ is represented by a tuple $(\rd_0,\varphi)$, with angles $\varphi \,{=}\, (\varphi_1,\dots,\varphi_{n-1})$ and center $x_0$, having a conversion function $x((\rd_0,\varphi),x_0)$ from polar to Cartesian coordinates, defined as follows:
\begin{equation}\label{eq:polar}
    \begin{aligned}
        &x((\rd_0,\varphi),x_0) = \\
        &\begin{pmatrix}
            x_{0,1} + \rd_0 \cos(\varphi_1)\\
            \vdots\\
            x_{0,n-1} + \rd_0 \sin(\varphi_1)\cdot\ldots\cdot\sin(\varphi_{n-2})\cos(\varphi_{n-1})\\
            x_{0,n} + \rd_0 \sin(\varphi_1)\cdot\ldots\cdot\sin(\varphi_{n-2})\sin(\varphi_{n-1})\\
        \end{pmatrix}
    \end{aligned}
\end{equation}
Whenever the center $x_0$ and the radius $\rd_0$ of the initial ball $\calB_0$ are known from the context, we will use the following notation: $x(\varphi)$ for the conversion from polar to Cartesian coordinates and $\varphi(x)$ for Cartesian to polar. Using polar coordinates, we restate the optimization problem~\eqref{eq:optim1} as follows:
\begin{align}\label{eq:optim2}
    \delta_j &= \max_{x\in\calB_0} \left\lVert \chij(x) - \chij(x_0)\right\rVert_{M_j}\nonumber\\
    &= \max_{\varphi\in \RE^{n-1}} \underbrace{\left\lVert \chij(x(\varphi)) - \chij(x_0)\right\rVert_{M_j}}
    _{=-\Lp(\varphi)}\nonumber\nonumber\\
    &=\min_{\varphi\in \RE^{n-1}} \Lp(\varphi) = m^\star,
\end{align}
We call $\Lp$ the \emph{loss function} in polar coordinates at time $t_j$ that we would like to minimize. Note that $\Lp$ also depends on the initial radius $\delta _0$ and initial center $x_0$; as these are fixed inputs, we do not consider them in the notation.
\section{Main Results}
In this section, we present our verification framework for Neural ODEs, which we call \textbf{Stochastic Lagrangian Reachability (SLR)}.
As the main results of this paper, we show that the algorithm guarantees safety and converges to the tightest ellipsoid, almost surely, in the limit of the number of samples. We then compute the convergence rate and discuss space and time complexities. 

\begin{algorithm}[t]
    \caption{Finding the local minimum}
    \label{algorithm:gradient descent}
    \begin{algorithmic}[1]
    \REQUIRE target time $t_j$, termination tolerance $\epsilon > 0$, learning rate $\gamma > 0$, initial guess $\varphi\in\RE^{n-1}$, loss function $L$, gradient of loss $\nabla_\varphi L$ 
    \STATE $l \leftarrow L(\varphi)$, $l_{prev} \leftarrow \infty$
    \WHILE{$|l-l_{prev}|/|l_{prev}| > \epsilon$}
        \STATE \textbf{compute} $\nabla_\varphi L$\label{line:loss gradient}
        \STATE $\varphi \leftarrow \varphi - \alpha \nabla_\varphi L$
        \STATE $l_{prev} \leftarrow l$
        \STATE $l \leftarrow L(\varphi)$
    \ENDWHILE
    \RETURN $\varphi, l$ 
    \end{algorithmic}
\end{algorithm}

\subsection{Gradient Computation}
Our algorithm uses gradient descent locally when solving the global optimization problem of Eq.~\eqref{eq:optim2}. Gradient descent is started from uniformly sampled points, which are not contained in already constructed safety regions.

Uniform sampling is used to repeatedly select an initial point from the surface of the ball $\calB_0$. Gradient descent is used from this point to find a local minimum. SLR is inspired by the \emph{gradient-only tabu-search} (GOTS) proposed in~\cite{stepanenko}. Instead of tabu regions, we use \emph{safety radii} $r(\varphi)$ to construct an area around already visited points $\varphi$, where we know for sure what the minimum value inside that region is. In the following, we describe the computational steps of the loss's gradient for the main SLR algorithm in greater detail. 

Given the target time $t_j$, termination tolerance $\epsilon \,{>}\, 0$, learning rate $\gamma\,{>}\, 0$, initial guess $\varphi\in\RE^{n-1}$, and loss function $L$, we seek to compute the gradient of loss $\nabla_\varphi L$. We introduce a new framework to compute the loss's gradient which is needed in Line~\ref{line:loss gradient} of Algorithm~\ref{algorithm:gradient descent} to find the local minimum. Using the chain rule, we can express the gradient $\nabla_\varphi L$ as follows:
\begin{equation}
\label{eq:derivatives}
\begin{split}
    &\frac{\partial L(\cdot)}{\partial \varphi}(\varphi) =  -\left.\frac{\partial \Lc \circ \chij  \circ x (\cdot)}{\partial\varphi}\right.\\
    & = - \underbrace{\left.\frac{\partial \Lc}{\partial y}\right|_{y=\chij( x(\varphi))}}_{(a)}
    \cdot
    \underbrace{\left.\frac{\partial \chij}{\partial x}\right|_{x=x(\varphi)}}_{(c)}
    \cdot
    \underbrace{\frac{\partial x(\cdot)}{\partial \varphi}}_{(b)}
 \end{split}
\end{equation}

\textbf{Part (a) - loss gradient wrt $y$:} The differentiation of the loss function defined in Eq.~\eqref{eq:optim1}
can be expressed as
\begin{align}\label{eq:loss gradient x}
    \partial_y \Lc (y) = A_j(y-\chij(x_0))\Lc(y)^{-1}A_j,
\end{align}
with $A_j$ from Def.~\ref{def:ellipsoid} and $M_j$ as the metric in $\chij(x_0)$ minimizing the volume of the ellipsoid~\cite{gruenbacher2020lagrangian}.

\textbf{Part (b) - polar gradient:} $ x(\varphi)$ describes the transformation from polar coordinates to Cartesian coordinates, as given in Eq.~\eqref{eq:polar}. The differentiation with respect to $\varphi$ is straightforward to obtain using the product rule and the derivatives of $\sin$ and $\cos$:
\begin{equation}\label{eq:polar gradient}
    \begin{aligned}
        &\partial_\varphi x(\varphi) = \\
        &\begin{pmatrix}
            -\rd_0 \sin(\varphi_1)\\
            \rd_0 \left(
            \cos(\varphi_1)\cos(\varphi_2) - \sin(\varphi_1)\sin(\varphi_2)
            \right)\\
            \vdots\\
        \end{pmatrix}
    \end{aligned}
\end{equation}

\textbf{Part (c) - gradient of the flow:}
The partial derivative $\partial_x \chij (x)$ in $x$ of the Neural ODE solution flow $\chi$ with respect to the initial condition is called the gradient of the flow or \emph{deformation gradient} in~\cite{linTE,contMec}, and the \emph{sensitivity matrix} in~\cite{breach,donze}. Let $I$ be the identity matrix in $\R^{n\times n}$.  As we now show, the sensitivity matrix $\partial_x \chij(x)$ is a solution of the \emph{variational equations} associated with~\eqref{neuralode}:
\begin{equation}
\label{eq:variational}
\begin{aligned}[c]
        \partial_x\chij(x) = F(t_j,x)\hspace{12ex}\\
        \partial_t F(t,x) = (\partial_x f)(\chit(x)) F(t,x),\quad 
        F(t_0,x)=I
\end{aligned}
\end{equation}
%

\noindent
\emph{Proof sketch}:
 By interchanging the differentiation order, we obtain $\partial_t(\partial_x \chit(x))\,{=}\,\partial_x (\partial_t \chit(x))$. Since $\chit(x)$ is a solution of Eq.~\eqref{neuralode}, $\partial_x (\partial_t \chit (x))\,{=}\,\partial_x (f(\chit(x)))$. By the chain rule, we get $\partial_t (\partial_x \chit (x))\,{=}\,(\partial_x f)(\chit(x)) \partial_x \chit(x)$. 

\begin{algorithm}[t]
    \caption{Computation of $\nabla_\varphi L$}
    \label{algorithm:computing gradient}
    \begin{algorithmic}[1]
    \REQUIRE target time $t_j$, initial value $\varphi\in \RE^{n-1}$, Neural ODE $f$, gradients $\partial_x \Lc$ and $\partial_\varphi  x$
    \STATE $b \leftarrow  x(\varphi), F \leftarrow I$
    \STATE $[b,F] \leftarrow$ solve\_ivp($[f(b,t),(\partial_b f)(b)\cdot F],[0,t_j],[b,F])$
    \STATE $\nabla_\varphi L \leftarrow -\partial_y \Lc(y) \cdot F \cdot \partial_\varphi  x$
    \RETURN $\nabla_\varphi L$ \COMMENT{Required in line~\ref{line:loss gradient} of algorithm~\ref{algorithm:gradient descent}}
    \end{algorithmic}
\end{algorithm}
\noindent\textbf{Forward-mode use of adjoint sensitivity method.} The integral of Eq.~\eqref{eq:variational} has the same form of the auxiliary ODE used for reverse-mode automatic differentiation of Neural ODEs, when optimized by the adjoint sensitivity method \cite{neuralODEs} with one exception. In contrast to \cite{neuralODEs}, which requires one to run the adjoint equation backward and have access to the termination time of the flow, our approach enjoys a simultaneous forward-mode use of the adjoint equation. This is due to the way we determine the loss function in the ODE space. In retrospect, this enables us to obtain the gradients of the loss at the current state-computation step. This property enables us to improve the optimization runtime by 50\%, compared to the optimization scheme used in \cite{neuralODEs}: we save half of the time because we do not have to go backward to compute the loss.

More precisely, solving Eq.~\eqref{eq:variational} until target time $t_j$ requires knowledge of $\chit(x)$ for all $t\in[t_0,t_j]$. 
This ensures that we already know the value of $\chit(x_0)$ when needed to compute the right side of Eq.~\eqref{eq:variational} during integration of $F(t,x)$. Algorithm~\ref{algorithm:computing gradient} demonstrates the computation of the gradient $\nabla_\varphi L$ of the loss function.
%

\subsection{Safety-Region Computation}\label{sec:TR and TD}
With our global search strategy, we are covering the feasible region $\calB_0^S$ with already visited points $\calV$. Consequently, we have access to the global minimum in all of those regions:
\begin{align}\label{eq:local minimum}
    \bar{m} = \min_{\varphi\in\calV}L(\varphi)
\end{align}
with $\bar{m}\ge m^\star$, where $m^\star$ is the global minimum of Eq.~\eqref{eq:optim2}. We now identify safety regions for a Neural ODE flow and describe how this is incorporated in the SLR algorithm.  
\begin{definition}[Safety Region]\label{def:TR}
    Let $\varphi_i\,{\in}\,\calV\subseteq\RE^{n-1}$ be an already-visited point. A safety-radius $r_{\varphi_i}\,{=}\,r(\varphi_i)$ defines a \emph{safe spherical-cap} $B(\varphi_i,r_{\varphi_i})^S \,{=}\, B(x(\varphi_i),r_{\varphi_i}) \cap \calB_0^S$, because $L(\psi)\ge\mu\cdot\bar{m}$ for all $\psi$ s.t. $x(\psi)\in B(\varphi_i,r)^S$.
\end{definition}
Our objective is to use the local Lipschitz constants to define a radius $r_\varphi$ around an already visited point $\varphi$ s.t.\ we can guarantee that $B(\varphi,r_\varphi)^S$ is a safety region.
\begin{definition}[Lipschitz]\label{def:lipschitz}
    The local Lipschitz constant (LLC) of a function $L$ in a region $A$ is defined as a $\lambda_A \ge 0$ with
    \begin{align*}
        \|L(x)-L(y)\|\le \lambda_A \|x-y\| \quad\forall x,y\in A.
    \end{align*}
\end{definition}
In the following theorem, we use the LLC to define the radius $r_\varphi$ of the safety (or tabu) region $ B(\varphi,r_\varphi)^S$ around an already-visited point $\varphi\in\calV$.
\begin{theorem}[Radius of Safety Region]\label{thm:safety region radius}
    At target time $t_j$, let $\bar{m}$ be the current global minimum, as in Eq.~\eqref{eq:local minimum}.
    Let $\varphi\in\calV$ be an already-visited point with value $L(\varphi)$ ($\ge \bar{m}$), and let $r_\varphi$ and $ B(\varphi,r_\varphi)^S$ be defined as follows with $\mu\ge 1$:
    \begin{align}\label{eq:safety radius}
        r_{\varphi} =
        \lambda_{\Sigma_\varphi}^{-1}\left(L(\varphi)-\mu\cdot\bar{m}\right)
    \end{align}
    with $\lambda_{\Sigma_\varphi} =
    \max_{x(\psi)\in\Sigma_\varphi}\lVert \partial_x \chij(x(\psi)) \rVert_{M_{0,j}}$. If $\Sigma_\varphi$ is chosen s.t.\  $\Sigma_\varphi\supseteq  B(\varphi,r_\varphi)^S$, then it holds that:
    \begin{align}\label{eq:safety radius result}
        L(\psi)\ge \mu\cdot\bar{m}\quad\forall x(\psi)\in  B(\varphi,r_\varphi)^S
    \end{align}
\end{theorem}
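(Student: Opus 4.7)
The plan is to translate the safety claim into a quantitative Lipschitz bound on $L$ around $\varphi$ and then check that the bound is slack enough to keep $L(\psi)$ above $\mu\bar m$. First I would unfold the definition of the loss: by Eq.~\eqref{eq:optim2} we have $L(\psi)=-\lVert\chij(x(\psi))-\chij(x_0)\rVert_{M_j}$ (and likewise for $\varphi$). Applying the reverse triangle inequality in the metric $M_j$ gives
\begin{equation*}
 L(\varphi)-L(\psi)\;\le\;\bigl\lVert\chij(x(\varphi))-\chij(x(\psi))\bigr\rVert_{M_j},
\end{equation*}
which reduces the theorem to a Lipschitz-type estimate on the flow map $\chij$ itself, evaluated at the two points $x(\varphi)$ and $x(\psi)$ on $\calB_0^S$.

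Next I would invoke the fundamental theorem of calculus along the straight segment from $x(\psi)$ to $x(\varphi)$,
\begin{equation*}
\chij(x(\varphi))-\chij(x(\psi))=\int_0^1 \partial_x\chij\bigl(x(\psi)+s(x(\varphi)-x(\psi))\bigr)\,(x(\varphi)-x(\psi))\,ds,
\end{equation*}
and use the definition of the mixed operator norm $\lVert\cdot\rVert_{M_{0,j}}$ (mapping the initial $M_0$-metric to the target $M_j$-metric) to bound the integrand pointwise by $\lambda_{\Sigma_\varphi}\lVert x(\varphi)-x(\psi)\rVert_{M_0}$, provided every point of the segment lies in $\Sigma_\varphi$. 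Since $x(\psi)\in B(\varphi,r_\varphi)^S$ means $\lVert x(\varphi)-x(\psi)\rVert_{M_0}\le r_\varphi$, and since $\Sigma_\varphi\supseteq B(\varphi,r_\varphi)^S$ by hypothesis, this yields
\begin{equation*}
 L(\varphi)-L(\psi)\;\le\;\lambda_{\Sigma_\varphi}\cdot r_\varphi
 \;=\;L(\varphi)-\mu\bar m
\end{equation*}
by the very definition~\eqref{eq:safety radius} of $r_\varphi$. Rearranging gives $L(\psi)\ge\mu\bar m$, which is exactly~\eqref{eq:safety radius result}.

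The main obstacle I anticipate is the geometric one: the straight chord between two points of $\calB_0^S$ typically leaves the surface and passes through the interior of $\calB_0$, so the hypothesis $\Sigma_\varphi\supseteq B(\varphi,r_\varphi)^S$ as literally stated is not obviously enough to cover the segment on which the FTC bound is applied. In the write-up I would therefore either (i) interpret $\Sigma_\varphi$ as the solid spherical cap in $\R^n$ whose trace on the surface is $B(\varphi,r_\varphi)^S$, so that the convex hull of the surface-cap is contained in $\Sigma_\varphi$, or (ii) argue along the geodesic arc on $\calB_0^S$, replacing the straight-segment FTC with a curve integral whose length is at most $\tfrac{\pi}{2}r_\varphi$ and absorbing the extra factor into $\lambda_{\Sigma_\varphi}$. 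Either reading is routine once the ambient set $\Sigma_\varphi$ is fixed, but the cleanest presentation is the first: taking $\Sigma_\varphi$ convex makes the chord argument immediate and the rest of the proof is just substitution into~\eqref{eq:safety radius}.
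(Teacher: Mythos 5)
Your proof is correct and takes essentially the same route as the paper's: the paper likewise reduces the claim to $\lvert L(\varphi_1)-L(\varphi_2)\rvert\le\lVert\chij(x(\varphi_1))-\chij(x(\varphi_2))\rVert_{M_j}\le\lambda_{\Sigma_\varphi}\lVert x(\varphi_1)-x(\varphi_2)\rVert$ via the triangle inequality plus the mean value inequality with change of metric, and then concludes from $\lambda_{\Sigma_\varphi}r_\varphi=L(\varphi)-\mu\cdot\bar m$ exactly as you do (your one-sided reverse-triangle-inequality formulation even spares you the paper's two-case analysis on the sign of $L(\psi)-L(\varphi)$). Your closing observation that the chord between two surface points leaves $\calB_0^S$, so the mean-value/FTC step really needs $\Sigma_\varphi$ to contain a convex solid neighborhood of the cap, is a genuine subtlety that the paper's proof glosses over as well; your fix (i) is the right reading and is consistent with Algorithm~\ref{algorithm:radius for safety region} initializing $\Sigma_\varphi$ to the full ball $\calB_0$.
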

%
%
The full proof is provided in the Appendix. \emph{Proof sketch:} The Lipschitz constant defines a relation between the values in the domain and the ones in the range of the function.
\begin{algorithm}[t]
    \caption{Computing the Radius of the Safety Region}
    \label{algorithm:radius for safety region}
    \begin{algorithmic}[1]
    \REQUIRE target time $t_j$, visited point $\varphi$, termination tolerance $\epsilon\,{>}\,0$, initial ball $\calB_0$ with radius $\rd_0$, minimum of visited points $\bar{m}$, loss function $L$, tolerance $\mu \,{\ge}\,1$, region $\Sigma_\varphi$ in which to compute the LLC $\lambda$.  
    \vspace*{2mm}
    \STATE $\Sigma_\varphi \leftarrow \calB_0, s\leftarrow \rd_0$
    \STATE $\lambda \leftarrow$ computeLipschitz($\Sigma_\varphi$)\label{line:compute lipschitz}
    \STATE $r \leftarrow 1/\lambda \cdot (L(\varphi)-\mu\cdot\bar{m})$
    \WHILE{$|r-s|/r > \epsilon$ \textbf{or} $s<r$}
        \STATE \textbf{set} $s \leftarrow r + |s-r|/2$
        \STATE $\Sigma_\varphi \leftarrow B(\varphi,s)^S$
        \STATE $\lambda \leftarrow$ computeLipschitz($\Sigma_\varphi$)
        \STATE $r \leftarrow 1/\lambda \cdot (L(\varphi)-\mu\cdot\bar{m})$
    \ENDWHILE
    \RETURN $r$
    \end{algorithmic}
\end{algorithm}

Theorem~\ref{thm:safety region radius} says that areas around already-visited samples are safe. The size of the safety areas increases if we have a better current global minimum. Therefore, the theorem demonstrates that we can improve the convergence rate if we optimize the loss, by possibly finding a better current global minimum. This justifies the use of gradient descent together with a more global search strategy.

%

Algorithm~\ref{algorithm:radius for safety region} computes the radius in Eq.~\eqref{eq:safety radius} as a fixpoint of the choice of $\Sigma_\varphi$. For an over-approximation of the LLC in Line~\ref{line:compute lipschitz}, we use the triangle inequality and the mean value inequality with a change in metric~\cite[Lemma 2]{Cyranka2017}.  We then solve Eq.~\ref{eq:variational} using interval arithmetic to obtain an interval gradient matrix $[\calF_t]\owns\partial_x\chij(x)$ $\forall\,x\in\,\Sigma_\varphi$, and take the maximum singular value of $[\calF_t]$, as proposed in~\cite{gruenbacherArch19}. Depending on the Neural ODE, it is presumably faster to pick $s\,{=}\,\rd_0$, and to always use the LLC $\lambda_{\calB_0}$ of the entire initial ball.
As a result of the way we select $r_\varphi$ in Theorem~\ref{thm:safety region radius}, we are able to increase the radii $r_{\varphi}$ 
as soon as a new region with a smaller local minimum than the previous ones is discovered. Thus: $\bar{m}\,{\le}\, \bar{m}_{prev}\Rightarrow L(\varphi)\,{-}\,\mu\cdot\bar{m} \ge L(\varphi)\,{-}\,\mu\bar{m}_{prev}\Rightarrow
r_\varphi \,{\ge}\, r_{\varphi, prev}$.

\begin{algorithm}[t]
    \caption{Stochastic Lagrangian Reachability}
    \label{algorithm:SLR}
    \begin{algorithmic}[1]
    \REQUIRE time horizon T, sequence of timesteps $t_j$ ($t_0\le t_1\le\dots\le t_k=T$), tolerance $\mu\,{\ge}\,1$, confidence level $\gamma\,{\in}\, (0,1)$, loss function $L$, gradient of loss $\nabla_\varphi L$
    \vspace*{2mm}
    \FOR{$(j=1; j\le k; j=j+1)$}\label{line:for loop Reachsets}
        \STATE $\calV, \calU \leftarrow \{\}$ \quad(list of visited and random points)
        \STATE $\calS\leftarrow \{\}$ \quad (total covered area)
        \STATE $\calP\leftarrow 0$, $\bar{m} \leftarrow 0$
        \WHILE{$\calP < 1 - \gamma$}
            \STATE \textbf{sample} $\varphi \in \R^{n-1}$
            \STATE $\calV\leftarrow \calV \cup \{\varphi\}$
            \STATE $\calU\leftarrow \calU \cup \{\varphi\}$
            \IF{$x(\varphi) \notin \calS$}
                \STATE $\varphi_{min} \leftarrow$ local minimum starting at $\varphi$ using gradient descent with $\nabla_\varphi L$\label{line:gradient descent}
                \STATE $\calV\leftarrow \calV \cup \{\varphi_{min}\}$
                \STATE $m \leftarrow L(\varphi_{min})$
            \ELSE
                \STATE $m \leftarrow L(\varphi)$
            \ENDIF
            \IF{$m \le \bar{m}$}
                \STATE $\bar{m} \leftarrow m$
                \STATE \textbf{set} $S\leftarrow\{\}$
                \FORALL{$\varphi_i\in \calV$}
                    \STATE \textbf{compute} new radius $r=r(\varphi_i)$ such that $L(\psi)\ge\mu\cdot\bar{m}$, \quad $\forall\psi\colon x(\psi)\in B(\varphi_i,r)^S$\label{line:increase radii}
                    \STATE \textbf{set} $\calS\leftarrow \calS\cup B(\varphi_i,r)^S$
                \ENDFOR
            \ELSE
                \STATE \textbf{compute} radius $r=r(\varphi)$ only for current $\varphi$ such that $L(\psi)\ge\mu\cdot\bar{m}$, \quad $\forall\psi\colon x(\psi)\in B(\varphi,r)^S$
                \STATE \textbf{set} $\calS\leftarrow \calS\cup B(\varphi,r)$
            \ENDIF
            \STATE $\calP \leftarrow \Pr(\mu \cdot \bar{m} \le m^\star)$ with $\mu\cdot\bar{m}\le\min_{\varphi\in\calS} L(\varphi)$
        \ENDWHILE
        \STATE $\rd_j\leftarrow -\bar{m}$
    \ENDFOR
    \RETURN $(\rd_1,\dots,\rd_k)$
    \end{algorithmic}
\end{algorithm}
\subsection{Stochastic Lagrangian Reachability}
By using local gradient computation, global uniform sampling, and safety regions as in Algorithm~\ref{algorithm:radius for safety region}, we present our SLR verification technique, as outlined in Algorithm~\ref{algorithm:SLR}. 

Given a Neural ODE as in Eq.~\eqref{neuralode} and a set of initial states $\calB_0$, we start by specifying a confidence level $\gamma\in(0,1)$ and a tolerance $\mu,{\ge}\,1$ for the entire Reachtube. The algorithm returns radii $\rd_j$,  $j\,{\in}\,\{1,\dots,k\}$, and the stochastic guarantee stating that $\calB_j (=B_{M_j}(\chij,\rd_j))$ overestimates by $\mu$ the true conservative Reachsets with a probability higher than $1\,{-}\,\gamma$. This holds also for the whole Reachtube, as it is defined by a series of Reachsets (Def.~\ref{def:Reachtube}). 

As we reinitialize the variables at the beginning of every new timestep $t_j$, and apply gradient descent to the loss function of the initial polar coordinates $\varphi$ at time $t_0$, we do not accumulate errors from one timestep to the next one. This is a prominent advantage compared to methods using interval arithmetic, and thus accumulating the wrapping effect, e.g.~\cite{capdTheory, CyrankaCDC18,fansimul}. Another advantage is that we can compute the for-loop in line~\ref{line:for loop Reachsets} of Algorithm~\ref{algorithm:SLR} (thus the Reachsets of the Reachtube) in parallel.

At every timestep $t_j$, we sample random points and construct safety regions around them until we reach the desired probability $1\,{-}\,\gamma$ of being inside the tolerance region defined by $\mu$. After sampling a new point, we check if this point is already in the covered area. If not, then we apply gradient descent to find a local minimum and compare this local minimum to the smallest value $\bar{m}$. Otherwise, if the sampled point is already in the covered area and thus in at least one safety region, we already know the lower bounds for that region and do not look for the local minimum again. This approach is similar to using baisins of attraction, but is more scalable because we do not require Hessian computation. In line~\ref{line:increase radii}, we recompute the radii of the safety regions when we find a new smallest value $\bar{m}$. By computing the current probability $\bar{p}$ of having reached the desired confidence level, we check whether we have to resample more points or whether we are able to finish that timestep and save the radius $\rd_j$ of the stochastic Reachset at time $t_j$.
\subsection{Stochastic Guarantees of Reachsets}\label{stochastic}
In this section, we derive the stochastic convergence guarantees and convergence bounds for finding the global minimum of Eq.~\eqref{eq:optim2} using SLR at every timestep $t_j$.
\begin{figure}[t]
    \centering
    \includegraphics[width=0.7\columnwidth]{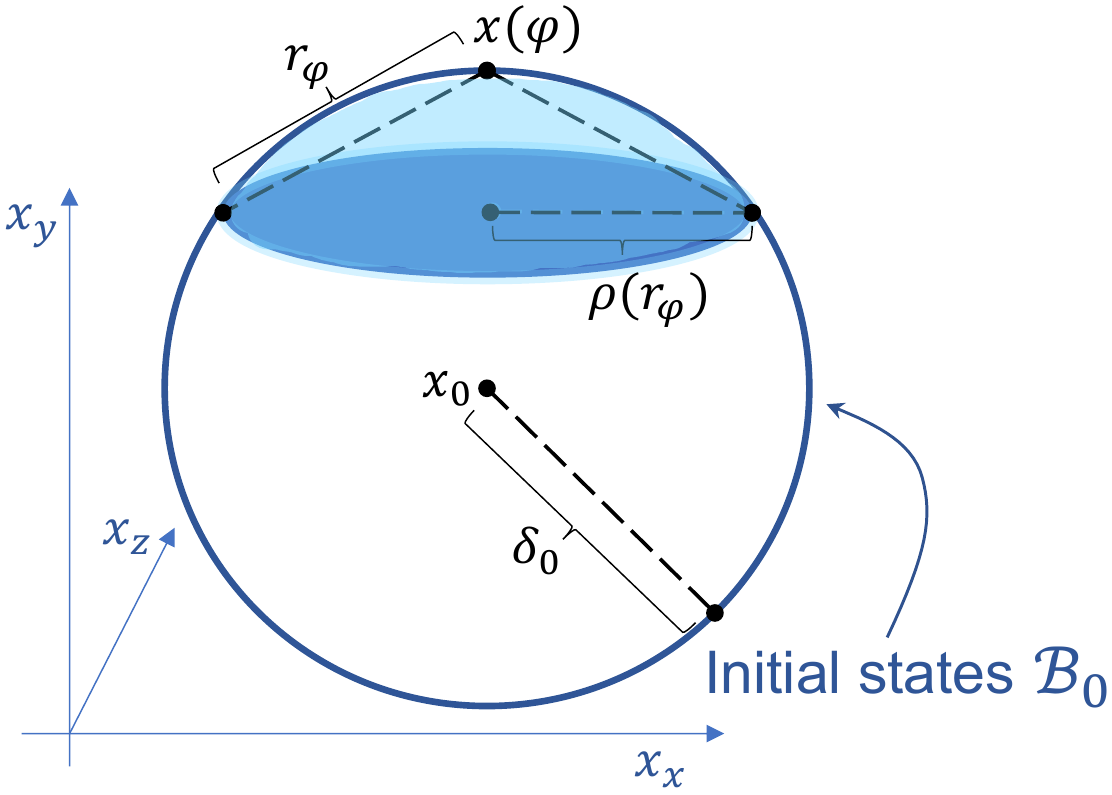}
    \caption{Illustration of a safety region $B(\varphi,r_\varphi)^S$, which is a spherical cap $\Ystar$. In this figure, the area of cap $\Ystar$ (in light blue) is greater than the volume of an $n-1$-dimensional ball (in dark blue) with radius $\rho(r_\varphi)$, which is used in the convergence rate.
}
    \label{fig:Probability1}
\end{figure}

Let $\bar{m}\,{=}\, \min_{\varphi\in\calV}L(\varphi)$ be defined as in Eq.~\eqref{eq:local minimum}, and let $m^\star\,{=}\,\min_{\varphi\in \RE^{n-1}} \Lp(\varphi)$ be the global minimum and $\varphi^\star$ an argument s.t.\  $\Lp(\varphi^\star)=m^\star$.
We start by defining the probability of $\Bphi$ covering $x(\phistar)$:
\begin{align}
    &\Pr(\Bphi\owns x(\varphi^\star))\nonumber\\
    &= \Pr\left(\| x(\varphi^\star)- x(\varphi)\|_2)\le \rmu\right)\nonumber\\
    &= \Pr(x(\varphi)\in \Ystar) = \Pr(\Ystar)\label{eq:probability region}
\end{align}
with $\rmu$ as defined in Eq.~\eqref{eq:safety radius} and $\Ystar =  B(\phistar,r_\varphi)^S$ being the spherical-cap in Fig.~\ref{fig:Probability1}.
By using the area of the spherical cap $\Ystar$ and the area of the initial ball's surface $\calB_0^S$, the probability defined by Eq.~\eqref{eq:probability region} can be described as follows:
\begin{align}
    \Pr(\Ystar) = \frac{\area(\Ystar)}{\area(\calB_0^S)}
\end{align}
The area of $\Ystar$ can be computed using the formulas in~\cite{hypersphericalCap}.
Next we derive some probabilities:
\begin{align}
    \Pr(\Bphi[\varphi_j]\niton\phistar) &= 1 - \Pr(\Ystar[\varphi_j])\nonumber\\
    \Pr(\forall \varphi\in\calU\colon \Bphi[\varphi]\niton\phistar) &= \prod_{\varphi\in\calU} \left(
    1 - \Pr(\Ystar[\varphi])
    \right)\nonumber\\
    \Pr(\exists \varphi\in\calU\colon\Bphi[\varphi]\owns\varphi^\star) &= 1 - \prod_{\varphi\in\calU} \left(
    1 - \Pr(\Ystar[\varphi])
    \right)\label{eq:probability of finding min}
\end{align}
Using Theorem~\ref{thm:safety region radius}, if $\phistar\in\Bphi[\varphi]$ for some $\varphi\in\calU$, then $\mu\cdot\bar{m}\le L(\phistar) = m^\star$ holds, and thus:
    \begin{align}\label{eq:probability of mu}
        \begin{split}
            &\Pr(\mu\cdot\bar{m}\le m^\star) \ge\\
            & \Pr(\exists \varphi\in\calU\colon\Bphi[\varphi]\owns\varphi^\star) 
        \end{split}
    \end{align}

\begin{theorem}[Convergence Guarantees]\label{thm:convergence guarantee}
    Given $\gamma\in(0,1)$, $\mu\ge 1$, local Lipschitz constant $\lambda_{\calB_0^S}$ and $N = |\calU|$, where N is the number of uniform-randomly generated points during global search process. Let $\bar{m}\,{=}\, \min_{\varphi\in\calV}L(\varphi)$ as defined in Eq.~\eqref{eq:local minimum}, $m^\star\,{=}\,\min_{\varphi\in \RE^{n-1}} \Lp(\varphi)$ the global minimum, and $\varphi^\star$ an argument s.t.\  $\Lp(\varphi^\star)=m^\star$. Then:
    \begin{align}
        \lim_{N\rightarrow\infty}\Pr(\mu\cdot\bar{m}_N\le m^\star) = 1\label{eq:convergence}
    \end{align}
    and thus
    \begin{align}
        \forall\gamma\in(0,1),\exists N\in\mathbb{N}\textrm{ s.t. }
        \Pr(\mu\cdot\bar{m}_N\le m^\star) \ge 1-\gamma
    \end{align}
\end{theorem}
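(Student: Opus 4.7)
The plan is to pick up from the inequalities already set up in the excerpt. Eqs.~\eqref{eq:probability of finding min} and~\eqref{eq:probability of mu} together give
\begin{align*}
\Pr(\mu\cdot\bar{m}_N \le m^\star) \ge 1 - \prod_{\varphi\in\calU}\bigl(1 - \Pr(\Ystar)\bigr),
\end{align*}
so the entire task reduces to driving the product on the right to zero as $N = |\calU| \to \infty$.

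First, I would observe that the samples $\varphi_j$ are drawn i.i.d.\ uniformly on $\calB_0^S$, while the iterate $\bar{m}_j$ is monotonically non-increasing in $j$. Since $r_\varphi = \lambdaS^{-1}(L(\varphi)-\mu\cdot\bar{m})$ only grows as $\bar{m}$ decreases, each factor $1-\Pr(\Ystar[\varphi_j])$ is dominated by the corresponding factor evaluated at the initial (worst-case) value of $\bar{m}$. Hence it is enough to produce a single strictly positive baseline $p_0$ with $\Pr(\Ystar[\varphi_j]) \ge p_0$ for every $j$.

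To produce such a $p_0$, I would use the global Lipschitz constant $\lambda_{\calB_0^S}$ of $L$ on the surface: for any $\varphi$ in a small Euclidean neighbourhood $U_\eta$ of $\phistar$ of radius $\eta$, Lipschitz continuity gives $L(\varphi) \le m^\star + \lambda_{\calB_0^S}\eta$, so $L(\varphi) - \mu\cdot\bar{m} \ge (m^\star - \mu\cdot\bar{m}) - \lambda_{\calB_0^S}\eta$. On the regime complementary to our target event we have $\mu\cdot\bar{m} > m^\star$, so the first term is strictly positive, and one can pick $\eta$ small enough to force $r_\varphi \ge \|x(\varphi)-x(\phistar)\|$ -- i.e.\ the cap $\Bphi$ already traps $\phistar$ -- giving $\Pr(\Ystar[\varphi_j]) \ge p_0 := \area(U_\eta)/\area(\calB_0^S) > 0$. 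The product then satisfies $(1-p_0)^N \to 0$, yielding Eq.~\eqref{eq:convergence}. The non-asymptotic clause follows by solving $(1-p_0)^N \le \gamma$, which gives any $N \ge \lceil \ln\gamma/\ln(1-p_0)\rceil$ as an explicit witness.

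The main obstacle I anticipate is making the previous step airtight without circularity: the trapping radius $r_\varphi$ depends on $\bar{m}$, which itself depends on the entire history of samples, and a naive reading invites the loop ``trapping is rare because $\bar{m}$ does not improve, and $\bar{m}$ does not improve because trapping is rare.'' The clean resolution is either to condition on the filtration generated by $\varphi_1,\dots,\varphi_{j-1}$ and apply the tower property, or, more robustly, to apply the second Borel--Cantelli lemma directly to the i.i.d.\ events $\{\varphi_j \in U_\eta\}$, whose joint occurrence alone forces $\bar{m}$ into the target regime regardless of any cross-iteration dependence.
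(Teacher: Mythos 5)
Your reduction to showing $\prod_{\varphi\in\calU}\bigl(1-\Pr(\Ystar)\bigr)\to 0$, and the observation that it suffices to exhibit a single uniform positive baseline $p_0$ for the cap probabilities, is exactly the paper's strategy. The gap is in how you produce $p_0$. You restrict to a neighbourhood $U_\eta$ of $\phistar$ and bound $L(\varphi)-\mu\cdot\bar{m}\ \ge\ (m^\star-\mu\cdot\bar{m})-\lambda_{\calB_0^S}\,\eta$, then claim that on the complement of the target event, where $\mu\cdot\bar{m}>m^\star$, ``the first term is strictly positive.'' It is strictly \emph{negative} there: $\mu\cdot\bar{m}>m^\star$ is by definition $m^\star-\mu\cdot\bar{m}<0$. (Recall the sign conventions: $L\le 0$, $m^\star\le\bar{m}\le 0$, and the target event $\mu\cdot\bar{m}\le m^\star$ says the inflated radius $-\mu\bar{m}$ over-approximates the true radius $-m^\star$.) Hence no choice of $\eta$ makes your lower bound on $r_\varphi$ positive, and the trapping step --- the heart of your construction of $p_0$ --- collapses. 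The paper obtains the uniform bound without localizing near $\phistar$ at all: since $L(\varphi)\ge\bar{m}$ for every visited $\varphi$ and $\bar{m}\le L(\varphi_1)<0$, one has $r_\varphi=\lambda_{\Sigma_\varphi}^{-1}(L(\varphi)-\mu\bar{m})\ \ge\ \lambda_{\calB_0^S}^{-1}(1-\mu)\bar{m}\ \ge\ \lambda_{\calB_0^S}^{-1}(1-\mu)L(\varphi_1)=r_{bound}>0$ for $\mu>1$, so every cap contains a cap of fixed radius $r_{bound}$ and $\Pr(\Ystar)\ge\Pr(\calC(r_{bound}))>0$ uniformly over the whole history.

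Two further remarks. First, your worry about circularity is legitimate and is in fact glossed over by the paper: the radii depend on the sample history, so passing from the product over $\calU$ to $(1-p_0)^N$ does deserve the conditioning/tower-property argument you sketch. Second, your Borel--Cantelli fallback can be repaired, but only by swapping out the broken trapping mechanism for a direct one: if some $\varphi_j$ lands in $U_\eta$, the algorithm updates $\bar{m}\le L(\varphi_j)\le m^\star+\lambda_{\calB_0^S}\eta$, and choosing $\eta\le(\mu-1)\lvert m^\star\rvert/(\mu\,\lambda_{\calB_0^S})$ then yields $\mu\bar{m}\le m^\star$ outright, with $\Pr(\varphi_j\in U_\eta)$ a fixed positive constant independent of the history. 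As written, however, the only mechanism you actually derive is the one carrying the sign error, so the proof does not go through.
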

\vspace{2ex}
The full proof is provided in the Appendix. \emph{Proof sketch:} By creating a lower bound $r_{bound}$ for all $\rmu$, s.t.\  $\Pr(\calC(\rmu)\ge\Pr(\calC(r_{bound}))$, we underestimate Eq.~\eqref{eq:probability of finding min} by $1-(1-\Pr(\calC(r_{bound})))^N$. Using this bound and Eq.~\eqref{eq:probability of mu}, we show that the convergence guarantee holds.

Theorem \ref{thm:convergence guarantee} shows that in the limit of the number of samples, the reachset constructed by Algorithm~\ref{algorithm:SLR} converges with probability~1 to the smallest ellipsoid that encloses the true reachable set. Note that the algorithm cannot converge to the true reachable set because we approximate the reachset by ellipsoids, while the true reachset might be of arbitrary geometrical shape. Nonetheless, we proved that it provides the smallest possible ellipsoid that contains a true reachset. 

Moreover, although Theorem~\ref{thm:convergence guarantee} shows that we achieve the tightest elliptical reachsets, it does not determine whether the algorithm can terminate or not, as the theorem is proven in the case of infinite samples. We now prove that SLR indeed converges at a reasonable rate. 

\subsection{Convergence Rate for SLR}
Theorem~\ref{thm:convergence rate} computes a convergence rate for Algorithm~\ref{algorithm:SLR}. 
\begin{theorem}[Convergence Rate]\label{thm:convergence rate}
    Given $\gamma\in(0,1)$, $\mu\ge 1$, local Lipschitz constant $\lambda_{\calB_0^S}$, and dimension $n$, let $\varphi_1$ be the first random sample point. We can guarantee that $\Pr(\mu\cdot\bar{m}\le m^\star)\ge 1-\gamma$ if we perform at most $N_{max}$ iterations of the SLR Algorithm~\ref{algorithm:SLR}, with
    \begin{equation}
    \begin{split}
        &N_{max} = \\
        &\ln{\gamma}\left/\ln\left(1-\frac1{2\sqrt{\pi}}\frac{\Gamma(n/2)}{\Gamma((n+1)/2)}\left(\frac{\rho(r_{bound})}{\rd_0}\right)^{n-1}\right)\right.\label{eq:guarantee maximum N}
    \end{split}
    \end{equation}
    and asymptotically it holds that
    \begin{align}
        N_{max} = \mathcal{O}\left(-\ln\gamma \left(\frac{\rd_0}{r_{bound}}\right)^{2n}\right),\label{eq:guarantee asymptotically}
    \end{align}
    with $r_{bound} = \lambda_{\calB_0^S}^{-1}(1-\mu)L(\varphi_1)$ and $\rho(r_{bound}) = r_{bound} \cdot\sin(\pi/2 - \arcsin(r/2\rd_0))$.
\end{theorem}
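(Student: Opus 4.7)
The plan is to make quantitative the probabilistic inequality already assembled in the proof of Theorem~\ref{thm:convergence guarantee} and to invert it in closed form. Combining Eq.~\eqref{eq:probability of mu} with Eq.~\eqref{eq:probability of finding min} and the uniform lower bound $r_{bound}$ on all safety radii $r_\varphi$ gives
\[
\Pr(\mu\cdot\bar{m}_N\le m^\star) \;\ge\; 1-\bigl(1-\Pr(\calC(r_{bound}))\bigr)^{N}.
\]
Requiring the right-hand side to be at least $1-\gamma$ is equivalent to $(1-\Pr(\calC(r_{bound})))^N\le\gamma$, i.e.\ $N\ge \ln\gamma/\ln(1-\Pr(\calC(r_{bound})))$ (both logs are negative). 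The specific value $r_{bound}=\lambda_{\calB_0^S}^{-1}(1-\mu)L(\varphi_1)$ is obtained by instantiating Eq.~\eqref{eq:safety radius} with the largest Lipschitz constant $\lambda_{\calB_0^S}$ and the tightest gap $L(\varphi)-\mu\bar{m}$: since $L(\varphi)\ge\bar{m}$ and $\bar{m}$ only decreases along the run with $\bar{m}\le L(\varphi_1)$, multiplying by the negative factor $(1-\mu)$ shows that every subsequent $r_\varphi$ dominates $r_{bound}$. Thus it suffices to produce an explicit lower bound on $\Pr(\calC(r_{bound}))$.

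For the geometric step, $\calC(r_{bound})$ is the spherical cap on $\calB_0^S$ cut out by a Euclidean ball of chord radius $r_{bound}$, so its half-angle is $\alpha=2\arcsin(r_{bound}/(2\rd_0))$ and the flat base of the cap is an $(n{-}1)$-ball of radius $\rd_0\sin\alpha = r_{bound}\sqrt{1-r_{bound}^2/(4\rd_0^2)} = \rho(r_{bound})$. Since orthogonal projection onto the hyperplane carrying that base is area-non-increasing, the curved cap surface has at least the area of the flat base disk (Fig.~\ref{fig:Probability1}). Combined with the standard formulas $\area(\calB_0^S)=2\pi^{n/2}\rd_0^{n-1}/\Gamma(n/2)$ and $\vol(B_{n-1}(\rho))=\pi^{(n-1)/2}\rho^{n-1}/\Gamma((n+1)/2)$, this yields
\[
\Pr(\calC(r_{bound})) \;\ge\; \frac{1}{2\sqrt{\pi}}\,\frac{\Gamma(n/2)}{\Gamma((n+1)/2)}\left(\frac{\rho(r_{bound})}{\rd_0}\right)^{n-1},
\]
which matches exactly the term under the logarithm in Eq.~\eqref{eq:guarantee maximum N}. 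Substituting gives the non-asymptotic formula for $N_{max}$.

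For the asymptotic bound~\eqref{eq:guarantee asymptotically} I would apply $-\ln(1-p)\ge p$ on $(0,1)$ to reduce $N_{max}\le -\ln\gamma/p$, then coarsen the dimensional dependence: Stirling yields $\Gamma((n+1)/2)/\Gamma(n/2)=\Theta(\sqrt{n})$, and in the regime $r_{bound}\ll \rd_0$ one has $\rho(r_{bound})=\Theta(r_{bound})$, so $1/p=\Theta(\sqrt{n}\,(\rd_0/r_{bound})^{n-1})$; absorbing the $\sqrt{n}$ prefactor and the exponent offset into a crude polynomial envelope produces $\mathcal{O}(-\ln\gamma\,(\rd_0/r_{bound})^{2n})$. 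The main obstacle I anticipate is the geometric step: cleanly identifying $\rho(r)$ as the base radius of the chord-radius-$r$ cap and justifying the area comparison between the curved cap surface and its flat base disk in arbitrary dimension $n$. Everything downstream is algebraic bookkeeping with Gamma-function identities and monotone manipulation of logarithms.
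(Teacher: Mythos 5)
Your proposal is correct and follows essentially the same route as the paper's proof: invert the bound $\Pr(\mu\cdot\bar{m}\le m^\star)\ge 1-(1-\Pr(\calC(r_{bound})))^N$ from the convergence-guarantee argument, lower-bound the cap probability by the volume of the flat $(n{-}1)$-ball of radius $\rho(r_{bound})$ over $\area(\calB_0^S)$, and pass to the asymptotic form via $\ln(1-x)\approx -x$. The only difference is that you supply an explicit projection argument for the cap-area inequality where the paper simply cites the hyperspherical-cap formulas, which is a welcome addition rather than a deviation.
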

The full proof is provided in the Appendix. \emph{Proof sketch:} As the radius $\rmu$ of the spherical cap is very small, we underestimate the area of the cap by removing the curvature and using the volume of an $n-1$ dimensional ball with radius $\rho(r_{bound})$ as shown in Fig.~\ref{fig:Probability1}. 
Thus, after finishing our global search strategy for timestep $t_j$, we have the stochastic guarantee that the functional values of every $\varphi\in\RE^{n-1}$ are greater or equal to $\mu\cdot\bar{m}$. This implies that we should initiate the search with a relatively large $\mu=\mu_1$, obtaining for every $\varphi$ a relatively large value of $r_{\varphi,\mu_1}$ and therefore obtain a faster coverage of the search space. Subsequently, we can investigate whether the reachset $\calB_j$ with radius $\rd_j=-\mu_1\cdot\bar{m}$ intersects with a region of bad (unsafe) states. If this is not the case, we can proceed to the next timestep $t_{j+1}$. Otherwise, we reduce $\mu$ to $\mu_2 < \mu_1$, which reduces the safety regions $\Bphi$ and thus the already-covered-set $\calS$. This means that we continue with our search strategy until the desired probability $1-\gamma$ is reached again for a smaller radius $\rd_j=-\mu_2\cdot\bar{m}$. Accordingly, we can find a first radius for $\calB_j$ faster and refine it as long as $\calB_j$ intersects with the region of bad states.

Theorem~\ref{thm:convergence rate} guarantees convergence of the algorithm. It shows that for a given confidence level $\gamma$, our algorithm terminates after at most $N_{max}$ steps. Essentially, the theorem leads us to the significant result that the problem of constructing an ellipsoid abstraction of the true reachset with probabilistic guarantees for a Neural ODE is able to terminate.


Additionally, the theorem assumes that we know the local Lipschitz constant, which is a reasonable assumption for proving convergence. In practice, one can safely replace the true Lipschitz constant by an upper-bound.

\subsection{Computational Complexity}
The complexity of Algorithm~\ref{algorithm:gradient descent} depends on the geometry of the loss surface. In particular, Algorithm~\ref{algorithm:gradient descent} may terminate after one iteration in case of a flat surface, whereas an exponential number may be needed for ill-posed problems, as is common practice when deriving convergence rates for gradient descent \cite{nagy2003steepest, drori2017exact}

The runtime of Algorithm~\ref{algorithm:computing gradient} is determined by the complexity of the ODE solver for simulating the given differential equation. For example, given the number of integration steps (implicit interpretation of the number of layers in a deep model) $L$, and the time horizon of the simulation $T$, Algorithm~\ref{algorithm:computing gradient} runs in time $\mathcal{O}(L \times T)$ and constant memory cost $\mathcal{O}(1)$ for each layer of a neural network $f$. 

The complexity of Algorithm~\ref{algorithm:radius for safety region} depends on the local Lipschitz constant and the smoothness of the flow. Computing the true Lipschitz constant of a neural network is known to be NP-complete \cite{virmaux2018lipschitz}. However, Algorithm~\ref{algorithm:radius for safety region} operates correctly when we replace the true Lipschitz constant by an easier-to-compute upper bound, obtained for instance by means of interval arithmetic.

Algorithm~\ref{algorithm:SLR} implements the main routine of our framework. Its complexity for a given confidence score $\gamma$ equals the convergence rate $N_{max}$ proven in Theorem~\ref{thm:convergence rate}, Eq.~\eqref{eq:guarantee asymptotically} 
for every Reachset. In particular, the runtime of Algorithm~\ref{algorithm:SLR} depends exponentially on the dimension of the given Neural ODE and logarithmically on the confidence score. 

\section{Conclusions and Future Work}\label{conclusions}
In this paper, we considered the verification problem for Neural ODEs. We introduced the SLR verification scheme, which is based on solving a global optimization problem. We designed a forward formulation of the adjoint method for the gradient descent algorithm.  We also established strong convergence guarantees for SLR, showing that it can establish tight ellipsoidal bounds for the Neural ODE under consideration, at an arbitrary time horizon.

An important future direction will be to improve the current convergence rate, which is exponential in the dimensionality of the Neural ODE network. Existing statistical verification methods are mostly concerned with the verification of (hybrid) dynamical systems having various uncertainties in model parameters, discrete jumps between modes, and/or initial states. We emphasize that reachability computation for Neural ODEs developed at scale will require dedicated methods tailored for that specific purpose.

\section*{Acknowledgements}
The authors would like to thank the reviewers for their insightful comments.
RH and RG were partially supported by Horizon-2020 ECSEL Project grant No. 783163 (iDev40). RH was partially supported by Boeing. ML was supported in part by the Austrian Science Fund (FWF) under grant Z211-N23 (Wittgenstein Award). SG was funded by FWF project W1255-N23. JC was partially supported by NAWA Polish Returns grant PPN/PPO/2018/1/00029.  SS was supported by NSF awards DCL-2040599, CCF-1918225, and CPS-1446832.

%
%

\bibliography{LRT_QR}
\newpage
~
\newpage

\appendix
\section{Appendix}
    \begin{theorem}[Radius of Safety Region]\label{thm-appendix:safety region radius}
    At target time $t_j$, let $\bar{m}$ be the current global minimum $\bar{m} = \min_{\varphi\in\calV}L(\varphi)$.
    Let $\varphi\in\calV$ be an already visited point with value $L(\varphi)$ ($\ge \bar{m}$) and let $r_\varphi$ and $ B(\varphi,r_\varphi)^S$ be defined as follows with $\mu\ge 1$:
    \begin{align}\label{eq-appendix:safety radius}
    \begin{split}
        r_{\varphi} &=
        \lambda_{\Sigma_\varphi}^{-1}\left(L(\varphi)-\mu\cdot\bar{m}\right)
    \end{split}
    \end{align}
    with $\lambda_{\Sigma_\varphi} =
    \max_{x(\psi)\in\Sigma_\varphi}\lVert \partial_x \chij(x(\psi)) \rVert_{M_{0,j}}$. If $\Sigma_\varphi$ is chosen s.t. $\Sigma_\varphi\supseteq  B(\varphi,r_\varphi)^S$, then it holds that
    \begin{align}\label{eq-appendix:safety radius result}
        L(\psi)\ge \mu\cdot\bar{m}\quad\forall x(\psi)\in  B(\varphi,r_\varphi)^S
    \end{align}
\end{theorem}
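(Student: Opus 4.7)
The plan is to establish a Lipschitz-type inequality that controls how much the loss $L$ can decrease as one moves from $\varphi$ to a nearby $\psi$, and then to observe that the definition of $r_\varphi$ is exactly calibrated so that this decrease is bounded by $L(\varphi)-\mu\bar{m}$. Concretely, my target is
\begin{equation*}
L(\varphi)-L(\psi)\le\lambda_{\Sigma_\varphi}\,\lVert x(\psi)-x(\varphi)\rVert_{2}\qquad\textrm{for every }x(\psi)\in B(\varphi,r_\varphi)^S.
\end{equation*}
Once this is shown, combining it with $\lVert x(\psi)-x(\varphi)\rVert_{2}\le r_\varphi=\lambda_{\Sigma_\varphi}^{-1}(L(\varphi)-\mu\bar{m})$ immediately yields $L(\psi)\ge\mu\bar{m}$, which is the conclusion.

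To derive the Lipschitz bound I would first unfold $L$. By Eq.~\eqref{eq:optim2}, $L(\varphi)=-\lVert\chij(x(\varphi))-\chij(x_0)\rVert_{M_j}$, so the reverse triangle inequality in the $M_j$-metric gives
\begin{equation*}
L(\varphi)-L(\psi)=\lVert\chij(x(\psi))-\chij(x_0)\rVert_{M_j}-\lVert\chij(x(\varphi))-\chij(x_0)\rVert_{M_j}\le\lVert\chij(x(\psi))-\chij(x(\varphi))\rVert_{M_j}.
\end{equation*}
Next I would apply the mean-value inequality with change of metric (Lemma 2 of \cite{Cyranka2017}) to the flow $\chij$ along the straight Euclidean segment from $x(\varphi)$ to $x(\psi)$. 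Under the hypothesis $\Sigma_\varphi\supseteq B(\varphi,r_\varphi)^S$ (with $\Sigma_\varphi$ chosen convex, e.g.\ $\calB_0$ or the Euclidean ball $B(x(\varphi),s)$ as done in Algorithm~\ref{algorithm:radius for safety region}) the segment lies in $\Sigma_\varphi$, so we obtain
\begin{equation*}
\lVert\chij(x(\psi))-\chij(x(\varphi))\rVert_{M_j}\le\lambda_{\Sigma_\varphi}\,\lVert x(\psi)-x(\varphi)\rVert_{M_0}=\lambda_{\Sigma_\varphi}\,\lVert x(\psi)-x(\varphi)\rVert_{2},
\end{equation*}
using $M_0=I$. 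This is the target inequality. Substituting $\lVert x(\psi)-x(\varphi)\rVert_{2}\le r_\varphi$ and $r_\varphi\lambda_{\Sigma_\varphi}=L(\varphi)-\mu\bar{m}$ finishes the argument. As a sanity check, $r_\varphi\ge 0$ because $\bar{m}\le 0$ and $\mu\ge 1$ imply $\mu\bar{m}\le\bar{m}\le L(\varphi)$, so the spherical cap is non-degenerate.

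The main obstacle I anticipate is the careful application of the mean-value inequality: $B(\varphi,r_\varphi)^S$ is itself a spherical cap on the surface of $\calB_0$ and is not Euclidean-convex, so the straight segment between two of its points generally leaves the cap through the interior of $\calB_0$. The strict hypothesis $\Sigma_\varphi\supseteq B(\varphi,r_\varphi)^S$ is what rescues the argument, provided $\Sigma_\varphi$ is itself chosen convex (or at least star-shaped with respect to $x(\varphi)$), and both admissible choices in Algorithm~\ref{algorithm:radius for safety region} satisfy this. A secondary subtlety is the implicit fixed-point nature of the definition: $r_\varphi$ depends on $\lambda_{\Sigma_\varphi}$, which depends on $\Sigma_\varphi$, which is in turn required to contain $B(\varphi,r_\varphi)^S$; the theorem statement presumes that a valid pair $(r_\varphi,\Sigma_\varphi)$ has already been produced (by the while-loop in Algorithm~\ref{algorithm:radius for safety region}), so the proof itself does not need to establish existence of such a pair.
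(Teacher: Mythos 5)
Your proposal is correct and follows essentially the same route as the paper's proof: the reverse triangle inequality in the $M_j$-metric plus the mean-value inequality with change of metric yield the local Lipschitz bound, which combined with $\lVert x(\psi)-x(\varphi)\rVert\le r_\varphi$ and the definition of $r_\varphi$ gives the claim. Your version is in fact slightly cleaner, since bounding the signed difference $L(\varphi)-L(\psi)$ directly avoids the paper's two-case analysis on the sign of $L(\psi)-L(\varphi)$, and your remarks on the convexity of $\Sigma_\varphi$ and the fixed-point nature of the definition flag subtleties the paper leaves implicit.
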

%
%
\begin{proof}
Given $r_\varphi$ and $\lambda_{\Sigma_\varphi}$ as defined in the above theorem. Using the mean value inequality for vector valued functions, the triangle inequality, and considering the change of metric~\cite[Lemma 2]{Cyranka2017} it holds that:
\begin{align*}
    &|L(\varphi_1)-L(\varphi_2)| = \\
    & \left| \left\lVert \chij(x(\varphi_1)) - \chij(x_0)\right\rVert_{M_j} - \left\lVert \chij(x(\varphi_2)) - \chij(x_0)\right\rVert_{M_j}\right| \\
    &\le \left\lVert \chij(x(\varphi_1)) - \chij(x(\varphi_2))\right\rVert_{M_j} \\
    &\le \lambda_{\Sigma_\varphi} \left\lVert x(\varphi_1) - x(\varphi_2) \right\rVert_I \quad \\  
    &\forall x(\varphi_1),x(\varphi_2) \in \Sigma_\varphi\supseteq  B(\varphi,r_\varphi)^S
\end{align*}
Thus $\lambda_{\Sigma_\varphi}$ is a local Lipschitz constant in the safety region $ B(\varphi,r_\varphi)^S$. By definition $\|x(\psi)\,{-}\,x(\varphi) \| \,{\le}\, r_\varphi$ for $x(\psi),{\in}\, B(\varphi,r_\varphi)^S$. Hence:
\begin{align}\label{eq-appendix:proof lipschitz}
    &|L(\psi)-L(\varphi)|\\
    &\le \lambda_{\Sigma_\varphi} \|x(\psi) - x(\varphi)\| \\
    &\le \lambdaS r_\varphi = L(\varphi)-\mu\cdot\bar{m} \quad
    \forall x(\psi)\in B(\varphi,r_\varphi)^S
\end{align}
To prove that Eq.~\eqref{eq-appendix:safety radius result} holds, we distinguish between two cases for $\psi$: (1)~$L(\psi)\,{\ge}\,L(\varphi)$ and (2)~$L(\psi)\,{<}\,L(\varphi)$. Case~(1) it is straightforward: $L(\psi)\ge L(\varphi)\ge \mu\cdot L(\varphi) \ge \mu\cdot\bar{m}$. In case~(2), we use Eq.~\eqref{eq-appendix:proof lipschitz} and thus:
\begin{align*}
    |L(\psi) - L(\varphi)| &= L(\varphi)-L(\psi) \le L(\varphi)-\mu\cdot\bar{m} \\
    &\Longrightarrow L(\psi) \ge \mu\cdot\bar{m},
\end{align*}
proving that Eq.~\eqref{eq-appendix:safety radius result} holds no matter if $L(\psi)\,{\ge}\, L(\varphi)$ or if $L(\psi) \,{<}\, L(\varphi)$, for all $x(\psi) \in B(\psi,r_\psi)^S$.
\end{proof}
\begin{theorem}[Convergence Guarantees]\label{thm-appendix:convergence guarantee}
    Given $\gamma\in(0,1)$, $\mu\ge 1$, local Lipschitz constant $\lambda_{\calB_0^S}$ and $N = |\calU|$, where N is the number of uniform-randomly generated points during global search process. Let $\bar{m}\,{=}\, \min_{\varphi\in\calV}L(\varphi)$ be the current minimum, $m^\star\,{=}\,\min_{\varphi\in \RE^{n-1}} \Lp(\varphi)$ the global minimum, and $\varphi^\star$ an argument s.t. $\Lp(\varphi^\star)=m^\star$. It holds that
    \begin{align}
        \lim_{N\rightarrow\infty}\Pr(\mu\cdot\bar{m}_N\le m^\star) = 1\label{eq-appendix:convergence}
    \end{align}
    and thus
    \begin{align}
        \forall\gamma\in(0,1)\exists N\in\mathbb{N},\textrm{ s.t. }
        \Pr(\mu\cdot\bar{m}_N\le m^\star) \ge 1-\gamma
    \end{align}
\end{theorem}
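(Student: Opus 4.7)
The plan is to combine the lower-bound from Eq.~\eqref{eq:probability of mu} with a uniform (sample-independent) lower bound on the safety radii $r_\varphi$, and then apply a standard Borel--Cantelli / iid-coverage argument.

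First, I would establish a uniform lower bound $r_{bound}>0$ on $r_\varphi$. From Theorem~\ref{thm-appendix:safety region radius} we have $r_\varphi = \lambda_{\Sigma_\varphi}^{-1}(L(\varphi)-\mu\bar m)$. Using $\Sigma_\varphi \subseteq \calB_0^S$ and that the local Lipschitz constant is monotone in the region, $\lambda_{\Sigma_\varphi} \le \lambda_{\calB_0^S}$, so $\lambda_{\Sigma_\varphi}^{-1}\ge \lambda_{\calB_0^S}^{-1}$. Next, since $\bar m_N = \min_{\varphi \in \calV} L(\varphi)$ is monotonically nonincreasing in $N$ and $\varphi_1\in\calV$ from the first iteration, we get $\bar m_N \le L(\varphi_1)$ for every $N\ge 1$. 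For any visited $\varphi$, $L(\varphi)\ge \bar m_N$, so $L(\varphi)-\mu\bar m_N \ge (1-\mu)\bar m_N$. Since $L$ is the negative norm in Eq.~\eqref{eq:optim2} we have $\bar m_N \le 0$ and $(1-\mu)\le 0$, so by multiplying the inequality $\bar m_N \le L(\varphi_1)$ by the nonpositive factor $(1-\mu)$ we get $(1-\mu)\bar m_N \ge (1-\mu)L(\varphi_1)\ge 0$. Putting everything together,
\begin{equation*}
r_\varphi \;\ge\; \lambda_{\calB_0^S}^{-1}(1-\mu)L(\varphi_1) \;=:\; r_{bound} \;>\;0,
\end{equation*}
where strict positivity is guaranteed generically (otherwise $\chij(x(\varphi_1))=\chij(x_0)$, a degenerate case easily handled separately).

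Second, I would invoke the monotonicity of the spherical-cap probability in the radius: since $r_\varphi \ge r_{bound}$, the cap $\calC(r_\varphi) \supseteq \calC(r_{bound})$, hence $\Pr(\calC(r_\varphi)) \ge \Pr(\calC(r_{bound})) =: p$, with $p>0$ because $r_{bound}>0$ and uniform sampling on $\calB_0^S$ gives positive measure to every cap of positive area. Plugging into Eq.~\eqref{eq:probability of finding min} and using independence of the uniform samples,
\begin{equation*}
\Pr\!\bigl(\exists \varphi\in\calU\colon B(\varphi,r_\varphi)^S \owns \varphi^\star\bigr)
\;\ge\; 1 - \prod_{\varphi\in\calU}\bigl(1-\Pr(\calC(r_\varphi))\bigr)
\;\ge\; 1-(1-p)^N.
\end{equation*}

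Third, combining this with Eq.~\eqref{eq:probability of mu} yields $\Pr(\mu\cdot\bar m_N \le m^\star) \ge 1-(1-p)^N$. Since $0<p\le 1$, we have $(1-p)^N\to 0$ as $N\to\infty$, proving Eq.~\eqref{eq-appendix:convergence}. For the non-asymptotic statement, for any $\gamma\in(0,1)$ it suffices to choose any integer $N\ge \ln\gamma/\ln(1-p)$, which is finite, to obtain $\Pr(\mu\cdot\bar m_N \le m^\star)\ge 1-\gamma$.

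The main obstacle is the uniform lower bound $r_{bound}$: the radii $r_\varphi$ are random quantities coupled through $\bar m_N$ and through the adaptive choice of $\Sigma_\varphi$, so some care is needed to show they cannot shrink below a deterministic positive constant along any sample path. The anchor $\bar m_N \le L(\varphi_1)$, together with the sign analysis above, is what makes the argument go through and is reused later to obtain the quantitative rate of Theorem~\ref{thm:convergence rate}.
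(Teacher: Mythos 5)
Your proposal is correct and follows essentially the same route as the paper's own proof: the same anchor $\bar m_N \le L(\varphi_1)$ with the sign analysis yielding the uniform bound $r_{bound}=\lambda_{\calB_0^S}^{-1}(1-\mu)L(\varphi_1)$, the same monotone cap-probability step, and the same $1-(1-p)^N$ bound combined with Eq.~\eqref{eq:probability of mu}. Your closing remark that $r_{bound}$ depends on the random first sample (so the bound is really conditional on $\varphi_1$) is a genuine subtlety that the paper's proof silently glosses over, but it does not change the conclusion.
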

\vspace{2ex}
\begin{proof}
    Next we derive some probabilities:
    \begin{align}
        \Pr(\Bphi[\varphi_j]\niton\phistar) &= 1 - \Pr(\Ystar[\varphi_j])\nonumber\\
        \Pr(\forall \varphi\in\calU\colon \Bphi[\varphi]\niton\phistar) &= \prod_{\varphi\in\calU} \left(
        1 - \Pr(\Ystar[\varphi])
        \right)\nonumber\\
        \Pr(\exists \varphi\in\calU\colon\Bphi[\varphi]\owns\varphi^\star) &= 1 - \prod_{\varphi\in\calU} \left(
        1 - \Pr(\Ystar[\varphi])
        \right)\label{eq-appendix:probability of finding min}
    \end{align}
    Using Theorem~\ref{thm-appendix:safety region radius}, if $\phistar\in\Bphi[\varphi]$ for some $\varphi\in\calU$, then $\mu\cdot\bar{m}\le L(\phistar) = m^\star$ holds, thus:
        \begin{align}\label{eq-appendix:probability of mu}
            \begin{split}
                &\Pr(\mu\cdot\bar{m}\le m^\star) \ge\\
                & \Pr(\exists \varphi\in\calU\colon\Bphi[\varphi]\owns\varphi^\star) 
            \end{split}
        \end{align}
    Thus, it holds that $\Pr(\mu\cdot\bar{m}\le m^\star)\ge 1 - \prod_{\varphi\in\calU} \left(
    1 - \Pr(\Ystar[\varphi])\right)$, with $\rmu$ as defined in Eq.~\eqref{eq-appendix:safety radius}.
    \begin{align}
        \rmu &\ge \lambda_{\calB_0^S}^{-1} (\min_{\varphi\in\calU}L(\varphi) - \mu\cdot\bar{m})\\
        &\ge \lambda_{\calB_0^S}^{-1}(1-\mu)\bar{m}\\
        &\ge \lambda_{\calB_0^S}^{-1}(1-\mu)L(\varphi_1)=r_{bound}\quad\forall \varphi\in\calU\label{eq-appendix:r bound},
    \end{align}
    with $\varphi_1$ being the first random sampled point.
    Hence:
    \begin{align}
        \Pr(\mu\cdot\bar{m}\le m^\star)&\ge 1 - \prod_{\varphi\in\calU} \left(
        1 - \Pr(\Ystar[{bound}])\right)\\
        &\ge 1 - \left(
        1 - \Pr(\Ystar[{bound}])\right)^N\label{eq-appendix:bound of probability}
    \end{align}
    As $\Pr(\Ystar[{bound}])\in(0,1)$, it follows that Eq.~\eqref{eq-appendix:convergence} holds and thus we are able to guarantee the convergence of our global search strategy.
\end{proof}
\begin{theorem}[Convergence Rate]\label{thm-appendix:convergence rate}
    Given $\gamma\in(0,1)$, $\mu\ge 1$, local Lipschitz constant $\lambda_{\calB_0^S}$ and dimension $n$. Let $\varphi_1$ be the first random-samples point. We can guarantee that $\Pr(\mu\cdot\bar{m}\le m^\star)\ge 1-\gamma$ if we perform at most $N_{max}$ iterations of the SLR Algorithm, with
    \begin{equation}
    \begin{split}
        &N_{max} = \\
        &\ln{\gamma}\left/\ln\left(1-\frac1{2\sqrt{\pi}}\frac{\Gamma(n/2)}{\Gamma((n+1)/2)}\left(\frac{\rho(r_{bound})}{\rd_0}\right)^{n-1}\right)\right.\label{eq-appendix:guarantee maximum N}
    \end{split}
    \end{equation}
    and asymptotically it holds that
    \begin{align}
        N_{max} = \mathcal{O}\left(-\ln\gamma \left(\frac{\rd_0}{r_{bound}}\right)^{2n}\right),\label{eq-appendix:gurantee asymptotically}
    \end{align}
    with $r_{bound} = \lambda_{\calB_0^S}^{-1}(1-\mu)L(\varphi_1)$ (as defined in Eq.~\eqref{eq-appendix:r bound}) and $\rho(r_{bound}) = r_{bound} \cdot\sin(\pi/2 - \arcsin(r/2\rd_0))$.
\end{theorem}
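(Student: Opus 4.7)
The plan is to invert the probability bound established inside the proof of Theorem~\ref{thm-appendix:convergence guarantee} and then replace the cap-area ratio by a closed-form lower bound. Specifically, I would start from Eq.~\eqref{eq-appendix:bound of probability},
\[
\Pr(\mu\cdot\bar m_N \le m^\star) \;\ge\; 1 - \bigl(1 - \Pr(\calC(r_{bound}))\bigr)^N,
\]
enforce the right-hand side to be at least $1-\gamma$, and solve for $N$. Because $\Pr(\calC(r_{bound}))\in(0,1)$, the denominator $\ln(1-\Pr(\calC(r_{bound})))$ is strictly negative, so this rearranges to
\[
N \;\ge\; \frac{\ln\gamma}{\ln\bigl(1-\Pr(\calC(r_{bound}))\bigr)}.
\]
Once a closed-form lower bound for $\Pr(\calC(r_{bound}))$ is substituted into the logarithm, this is exactly the $N_{max}$ of Eq.~\eqref{eq-appendix:guarantee maximum N}.

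For that closed-form bound, I would use the trigonometric identity $\sin(\pi/2-\arcsin x)=\sqrt{1-x^2}$ to recognise $\rho(r_{bound})$ as the radius of the $(n-1)$-dimensional equatorial base disk of the spherical cap $\calC(r_{bound})$ cut out of the sphere $\calB_0^S$ of radius $\rd_0$ by the Euclidean ball of radius $r_{bound}$ centred at $x(\varphi^\star)$; this is a one-line chord-to-half-angle computation. Orthogonal projection of the cap onto the affine hyperplane containing its base disk is a $1$-Lipschitz map between $(n-1)$-dimensional sets, so the surface $(n-1)$-area of the cap dominates the $(n-1)$-volume of its flat projection. Dividing the $(n-1)$-volume $\pi^{(n-1)/2}\rho^{n-1}/\Gamma((n+1)/2)$ of a ball of radius $\rho(r_{bound})$ by the surface area $2\pi^{n/2}\rd_0^{n-1}/\Gamma(n/2)$ of $\calB_0^S$ and simplifying the $\pi$-powers yields
\[
\Pr(\calC(r_{bound})) \;\ge\; \frac{1}{2\sqrt{\pi}}\,\frac{\Gamma(n/2)}{\Gamma((n+1)/2)}\left(\frac{\rho(r_{bound})}{\rd_0}\right)^{n-1},
\]
which is the exact quantity inside the logarithm in Eq.~\eqref{eq-appendix:guarantee maximum N}. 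Plugging this into the $N$-inversion gives the non-asymptotic bound.

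For the asymptotic rate Eq.~\eqref{eq-appendix:gurantee asymptotically}, I would proceed in two steps. First, in the regime $r_{bound}\ll\rd_0$ the right-hand side of the cap-probability bound is a small number $p$, and the Taylor expansion $\ln(1-p)=-p(1+o(1))$ gives $N_{max}=\Theta(-\ln\gamma/p)$. Second, Stirling's formula gives $\Gamma(n/2)/\Gamma((n+1)/2)=\Theta(1/\sqrt n)$, and $\rho(r_{bound})/r_{bound}\to 1$ as $r_{bound}/\rd_0\to 0$, so $1/p=O\bigl(\sqrt n\,(\rd_0/r_{bound})^{n-1}\bigr)$. Since $\rd_0/r_{bound}>1$ by construction of $r_{bound}$, the loose upper bound $\sqrt n\,(\rd_0/r_{bound})^{n-1}\le (\rd_0/r_{bound})^{2n}$ then absorbs the polynomial prefactor into the exponential, producing the stated $\mathcal{O}(-\ln\gamma\,(\rd_0/r_{bound})^{2n})$.

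The main obstacle I anticipate is the area inequality in the middle step: cleanly justifying that orthogonal projection from the curved cap onto its flat base disk is a contraction on $(n-1)$-dimensional Hausdorff measure, and therefore that the cap's surface area dominates the flat disk's volume, uniformly for every $r_{bound}\le 2\rd_0$, without introducing extra constants. The remaining pieces---the single logarithm inversion, the gamma-function identity, and the combined Stirling--Taylor asymptotics---are essentially mechanical once the area lower bound is in place.
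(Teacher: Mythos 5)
Your proposal follows essentially the same route as the paper's proof: invert the bound $\Pr(\mu\cdot\bar m_N\le m^\star)\ge 1-(1-\Pr(\calC(r_{bound})))^N$ to solve for $N$, lower-bound the cap probability by the ratio of the $(n-1)$-volume of the cap's flat base disk of radius $\rho(r_{bound})$ to the surface area of $\calB_0^S$, and obtain the asymptotic rate via $\ln(1-p)\approx -p$. The only difference is that you supply self-contained justifications (the $1$-Lipschitz projection argument and the Stirling--Taylor absorption of the $\sqrt{n}$ prefactor) for steps the paper delegates to its citations of the hyperspherical-cap and stochastic-global-optimization references; these are correct and arguably make the argument more complete.
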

\begin{proof}
    If the right-hand side of Eq.~\eqref{eq-appendix:bound of probability} equals $1-\gamma$, it holds that $\Pr(\mu\cdot\bar{m}\le m^\star)\ge 1-\gamma$. Thus reformulating that equation, we get an upper bound $N$:
    \begin{align}
        1 - \left(
        1 - \Pr(\Ystar[{bound}])\right)^{N} &= 1 - \gamma & \Leftrightarrow\\
        \left(
        1 - \Pr(\Ystar[{bound}])\right)^{N} &= \gamma & \Leftrightarrow\\
        \frac{\ln(\gamma)}{\ln(1-\Pr(\Ystar[{bound}]))}&= N &\label{eq-appendix:bound N}
    \end{align}
    We use the following to overestimate of $\Pr(\Ystar[{bound}])$: Using the results of~\cite{hypersphericalCap} it holds that $\area(\Ystar[{bound}])\ge \vol_{n-1}(\rho(r_{bound}))$, thus:
    \begin{align}\label{eq-appendix:probability of cap}
        \begin{split}
            &\Pr(\Ystar[{bound}])\ge \frac{\textrm{Vol}_{n-1}(\rho(r_{bound}))}{\area(\calB_0)}=\\
            &\frac{\pi^{(n-1)/2}}{\Gamma((n+1)/2)}\rho(r_{bound})^{n-1}\frac{\Gamma(n/2)}{2\pi^{n/2}}\frac1{\rd_0^{n-1}}
        \end{split}
    \end{align}
    with $\Gamma$ being the gamma function. Putting Eq.~\eqref{eq-appendix:probability of cap} and Eq.~\eqref{eq-appendix:bound N} we obviously get to equation~\eqref{eq-appendix:guarantee maximum N}. Using the information that $\ln(1-x)\approx -x$ and using the results of ~\cite{stochGlobOptim}[Section 2.2], the asymptotical value of $N_{max}$ equals to Eq.~\eqref{eq-appendix:gurantee asymptotically}
\end{proof}

\end{document}